\def\eqref#1{equation~\ref{#1}}
\def\1{\bm{1}}
\DeclareMathAlphabet{\mathsfit}{\encodingdefault}{\sfdefault}{m}{sl}
\SetMathAlphabet{\mathsfit}{bold}{\encodingdefault}{\sfdefault}{bx}{n}
\newcommand{\R}{\mathbb{R}}
\newtheorem{theorem}{Theorem}[section]
\theoremstyle{definition}
\newtheorem{definition}[theorem]{Definition}
\newcommand{\maps}{\colon}
\renewcommand{\thesection}{\arabic{section}}
\renewcommand{\thesubsection}{\arabic{subsection}}
\renewcommand{\thesubsubsection}{\arabic{subsubsection}}
\titleformat{\subsection}[runin]{\normalfont\normalsize\scshape}{\thesection.\thesubsection}{1em}{}
\titlespacing*{\subsection}{0pt}{0.25ex plus 1ex minus .2ex}{1em}
\titleformat{\subsubsection}[runin]{\normalfont\normalsize\itshape}{\thesection.\thesubsection.\thesubsubsection}{1em}{}
\titlespacing*{\subsubsection}{0pt}{0.25ex plus 1ex minus .2ex}{1em}
\title{DiagrammaticLearning: A Graphical Language for Compositional Training Regimes}
\author{
Mason Lary$^*$ \\
University at Buffalo \\
12 Capen Hall \\
Buffalo, NY 14260 \\
\texttt{masonlar@buffalo.edu}
\And
Richard Samuelson$^*$ \\
University of Florida \\
600 Gale Lemarand Dr. \\
Gainesville, FL 32611 \\
\texttt{rsamuelson@ufl.edu}
\And
Alexander Wilentz \\
Harvard University \\
Massachusetts Hall\\
Cambridge, MA 02138\\
\texttt{awilentz@college.harvard.edu}
\And
Alina Zare \\
University of Florida \\
1889 Museum Rd \\
Gainesville, FL 32611 \\
\texttt{azare@ufl.edu} \\
\And
Matthew Klawonn \\
Air Force Research Laboratory \\
Information Directorate \\
525 Brooks Rd \\
Rome, NY 13441 \\
\texttt{matthew.klawonn.2@us.af.mil} \\
\And
James P. Fairbanks \\
University of Florida \\
600 Gale Lemarand Dr. \\
Gainesville, FL 32611 \\
\texttt{fairbanksj@ufl.edu}
}
\definecolor{mygray}{RGB}{232, 234, 237}
\begin{document}

\maketitle
\def\thefootnote{*}\footnotetext{These authors contributed equally to this work}

\begin{abstract}
Motivated by deep learning regimes with multiple interacting yet distinct model components, we introduce \emph{learning diagrams}, graphical depictions of training setups that capture parameterized learning as data rather than code. A learning diagram compiles to a unique loss function on which component models are trained. The result of training on this loss is a collection of models whose predictions ``agree" with one another. We show that a number of popular learning setups such as few-shot multi-task learning, knowledge distillation, and multi-modal learning can be depicted as learning diagrams. We further implement learning diagrams in a library that allows users to build diagrams of PyTorch and Flux.jl models. By implementing some classic machine learning use cases, we demonstrate how learning diagrams allow practitioners to build complicated models as compositions of smaller components, identify relationships between workflows, and manipulate models during or after training. Leveraging a category theoretic framework, we introduce a rigorous semantics for learning diagrams that puts such operations on a firm mathematical foundation. 
\end{abstract}

\section{Introduction}
The deep learning literature is rife with training regimes that exhibit non-trivial interactions between distinct models. Examples include multi-modal architectures with vision and language components (e.g \cite{vinyals2015show}, \cite{ramesh2022hierarchical}), knowledge distillation schemes \cite{hinton_distilling_2015}, multi-task learning setups, and on. The practitioner who manages such collections often juggles interacting models that at times are training or frozen, sometimes classifiers and sometimes feature extractors, sometimes trained on a single task and sometimes on many. Motivated by such practical settings we introduce a formalism for building models that treats data sets, models, and their interactions as structured data rather than as code. Our primary contribution, the \emph{learning diagram}, is graphical in nature and has a rigorous mathematical semantics, meaning that learning diagrams can be interpreted unambiguously, manipulated with confidence, and related to one another. Learning diagrams are also compositional, meaning that complex training setups can be built from easier to understand pieces.

At the heart of our formalism is the observation that machine learning problems can often be framed as a search for commuting diagrams. In what is arguably the simplest case, we begin with a collection of patterns $x_i \in \R^m$ and labels $y_i \in \R$. We select an architecture $f: \R^m \times \mathbb{R}^n \to \R$ and learn a parameter $\theta \in \R^n$ such that \ref{mldef} holds for all pairs $(x_i, y_i) \in \R^m \times \R$.
\begin{equation}
    f(x_i, \theta) \approx y_i
    \label{mldef}
\end{equation}
 If we model the patterns, labels, and parameters as functions

\begin{equation}
    X: l      \to \R^m,\quad
    Y: l      \to \R,\quad
    \theta: 1 \to \R^n,
\end{equation}

where $l$ is a finite set that indexes patterns and labels, then the equality in \ref{mldef} means that the following diagram (approximately) commutes, i.e. that the pair of paths starting at $l$ and ending at $\R$ produce the same real valued output for any choice of element in $l$.

\begin{equation}\label{diag:simple_prediction}
\begin{tikzcd}
	& {\R^m \times \R^n} \\
	l && \R
	\arrow["f", from=1-2, to=2-3]
	\arrow["{X \times \theta}", from=2-1, to=1-2]
	\arrow["Y"', from=2-1, to=2-3]
\end{tikzcd}    
\iff
X\times\theta\cdot f = Y
\end{equation}

Anatomically, Diagram \ref{diag:simple_prediction} is a graph with nodes that represent \emph{spaces} and edges that represent \emph{maps} between them. The goal of parameterized learning is to find $\theta$ such that the diagram comes as close as possible to commuting. We observe that the diagram itself can contribute to this goal if the common codomain of labels $Y$ and function $f$, a role played by $\R$ in the diagram above, is imbued with a loss function. In that case we can measure the difference between predictions of $f$ and ground truth labels $Y$ and train the component models to minimize said difference. As the loss is minimized, the diagram gets closer to commuting.

The process of generating losses is not limited to diagrams with only one pair of paths that must commute. In fact, we will see that many training regimes are designed to make multiple pairs of paths commute all at once. Further, the implementation of training setups as learning diagrams is not an academic exercise. Rather, doing so allows for the construction and manipulation of training setups in a way that would be more difficult without a structured representation. We believe such a shift from learning pipelines as code to learning pipelines as data is needed to address open problems identified by prior work, such as the need for a ``programming interface ... to specify that various adapted models are derived from the same pre-trained model'' \cite{bommasani2021opportunities}, or ``tools that will allow us to build pre-trained models in the same way that we build open-source software'' \cite{raffel2021call}. Our specific contributions are as follows.

\begin{itemize}
    \item We formalize the process of producing diagrams of models and data and compiling them to loss functions. We imbue learning diagrams with a rigorous semantics by way of category theoretic machinery.
    \item We introduce a software library, DiagrammaticLearning.jl, that realizes the theory of learning diagrams to supply convenient operations for building and manipulating training setups, both before models are trained and after, when they may be used as components of other training setups.
    \item We provide several examples of common training paradigms that can be captured as learning diagrams, and show how their implementation as such affords convenient functionality for common ML tasks.
\end{itemize}

The plan of the paper is to present these contributions in reverse order. To motivate the implementation we will first show in Section \ref{sec:experiments} how certain popular training setups can be realized using our approach. To do so, we have selected a small sample of classic machine learning papers and re-implemented them using our framework, along the way recovering the results of the original papers and demonstrating the utility of our diagrammatic formalism for building and manipulating models. In Section \ref{sec:implementation} we describe the beta version of our library, highlighting available features and the engines that make them work. Finally, in Section \ref{sec:ct} we cover the mathematical underpinning of learning diagrams that enables the operations realized in our library. We conclude with an examination of related work and directions for future research.

\section{Learning Diagram Demonstrations}\label{sec:experiments}
Our goal in this section is to show that our approach to developing training loops is expedient and feature rich without negatively affecting the final performance of the produced models. We do not aim to outperform our baselines, but rather to recreate them. In so doing, we will motivate the increasingly abstract constructions of later sections. To facilitate intuition before rigorous definitions, we will leave some terms undefined for now.

We have identified training setups from classic papers that are both representative of widely used paradigms, and are sufficiently complex to highlight aspects of our approach that are practically useful. A particular paper's omission from our experiments does not necessarily indicate that we cannot model it as a learning diagram, nor that doing so would be a fruitless endeavor. Among the diagrams we have worked out but not included are norm-based regularization techniques and autoencoders.

\subsection{Learning Diagram Notation}
In each example that follows we define a learning diagram first by considering domains of data corresponding to datasets, their components (such as images and labels), and the representation spaces induced by architectural choices. Each space has an associated Lawvere metric~\cite{lawvere1973metric}, i.e. is a generalization of a metric space relaxing the symmetry and separation axioms, where distances between points may be infinite. For spaces where elements ought to be comparable and generate a loss, we assign one and denote such a space $Y$ as $(Y, \mathcal{L})$, otherwise we assume that distinct points are infinitely far from one another and denote the space $(Y, \infty)$. The edges in a diagram that connect one domain to another depict models and data. We will see in Section \ref{sec:ct} that such spaces and maps between them form an object of mathematical significance called a \emph{category}. Our formal theory requires that parameter spaces also be represented explicitly, but for the sake of exposition we will omit or include them in our illustrations as convenient.

With this recipe for specifying learning diagrams in mind, we move on to our first example of learning diagrams in action: Neural Image Captioning~\cite{vinyals2015show}. It is the oldest of the group of examples and increases the complexity over standard predictions only slightly, but allows us to introduce some useful features of our approach and library.

\subsection{Image Captioning}
\cite{vinyals2015show} was the first paper to demonstrate an end to end trainable model for image captioning. Named the ``Neural Image Captioner" (NIC), it uses a fixed visual encoder to extract features from an image, and a trainable recurrent component to produce a caption. We can depict the model as in Diagram \ref{diag:show_and_tell}.

\newcommand{\CNN}{\text{CNN}}
\newcommand{\LSTM}{\text{LSTM}}

\begin{equation}\label{diag:show_and_tell}
\begin{tikzcd}[ampersand replacement=\&]
	{(N, \infty)} \&\& {(Z_I \times Y, \infty)} \&\& {(Z_L, \infty)} \\
	\&\&\&\& {(Y, \mathcal{L}_{ce})}
	\arrow["{\langle CNN, Y\rangle}", from=1-1, to=1-3]
	\arrow["LSTM", from=1-3, to=1-5]
	\arrow["Label"', from=1-3, to=2-5]
	\arrow["Prediction", from=1-5, to=2-5]
\end{tikzcd}
\end{equation}

In replicating the NIC diagrammatically, we aim to show that diagrams enable easy swapping of architectures for different components, and easy manipulation of model properties such as whether a model updates or is frozen. The components of the NIC diagram include a graph that captures the ``syntax" of our training set-up, as well as ``semantic" data associated with the graph that includes the particular choice of architectures and spaces. Such separation can be exploited; whereas the original paper has a single choice of CNN encoder \cite{szegedy2015going}, we can easily vary the encoder backbone by simply assigning a different architecture to the $\CNN$ edge of the diagram. In other words, there is no need to manually chain together the CNN and LSTM, as our backend handles it for us. The code required for such a switch is depicted in Listing \ref{lst:nic_features}, and is only slightly edited from our implementation. We trained three semantic variations of Diagram \ref{diag:show_and_tell} corresponding to three choices of CNN encoder; GoogLeNet \cite{szegedy2015going}, Resnet-50 \cite{he2016deep}, and an image transformer \cite{dosovitskiy2020image}. \cite{vinyals2015show} used GoogLeNet in the original paper. By using more powerful architectures and replicating other portions of the model and training procedures, we were able to outperform the original paper's BLEU score \cite{Papineni02bleu:a} on the Flickr8k \cite{hodosh2013framing} and Flickr30k \cite{plummer2015flickr30k} image captioning data sets. The results of our experiments are captured in Table \ref{tab:show_and_tell}.

\begin{table}
    \centering
    \begin{tabular}{c c c c c}
         \toprule
         Data Set & GoogLeNet & Resnet-50 & ViT-B-16 & Original \\ [0.5ex] 
         \midrule
         Flickr8k BLEU & 66.85 & 66.93 & \textbf{71.39} & 63 \\ 
         Flickr30k BLEU & 70.06 & \textbf{76.72} & 71.78 & 66 \\
         \bottomrule
    \end{tabular}
    \caption{Results of image captioning models trained as learning diagrams, named after the convolutional component.}
    \label{tab:show_and_tell}
\end{table}

The mix of trainable and fixed models offers a chance to highlight some model manipulation. In particular, we can utilize model homomorphisms to identify sub-components and apply functions to them. To specify a homomorphism, we first specify the shape of a sub-diagram, then describe a mapping from the sub-diagram to the larger model with components we want to manipulate. In Listing \ref{lst:nic_features} we show code that allows us to set the vision encoder to ``eval" mode. While this is easy enough to do with standard PyTorch for such a simple model, for more complex diagrams with more models it becomes especially advantageous to leverage homomorphisms which can pick out subgraphs based on some structural pattern matching and apply different transformations to the different pieces of the pattern~\cite{ehrig1973graphgrammar}.

\begin{listing}[h]
\begin{minted}
[
bgcolor=mygray,
fontsize=\footnotesize
]
{julia}
# Easy switching between architectures
v_model = load_CNN(cnn_name)
...
graph = @graph begin
            N, Z_I_x_Y, Z_L, Y
            CNN_x_Y: N → Z_I_x_Y
            LSTM: Z_I_x_Y → Z_L
            Label: Z_I_x_Y → Y
            Prediction: Z_L → Y
        end
vdata = Dict(
        :N =>       Dict(:isfinite => true,  :data_source => train_dl),
        :Z_I_x_Y => Dict(:isfinite => false, :loss => x -> Inf),
        :Z_L =>     Dict(:isfinite => false, :loss => x -> Inf),
        :Y =>       Dict(:isfinite => false, :loss => cross_entropy))

edata = Dict(
         :CNN_x_Y => Dict(:model => x -> v_model(x[0]),
                          :leq => ["Label"]),
         ...
         )
            
NIC = makemodel(vdata, edata, vertices, edges)

# Example homomorphism specification and manipulation.
# This defines an edge, maps it to the CNN of the NIC diagram,
# and sets the CNN to eval
single_edge = @graph begin
               V1, V2
               Edge: V1 -> V2
               end
hom = graphtransform(single_edge, graph, V=[1, 2], E=[1])
map_on_hom(hom, graph, vdata, edata, e_part = x -> x["model"].eval())

\end{minted}

\vspace{-3ex}
\caption{Key functionality for the NIC implementation. Users specify diagrams as Python or Julia data structures (Julia shown) similar to an edge list representation of the underlying graph and assign models to components of the diagram. Because the parts of the model are represented as data instead of code, users can specify and manipulate sub-diagrams and components programmatically, without the complexity of manipulating unstructured code.}
\label{lst:nic_features}
\end{listing}

\subsection{Knowledge Distillation}
Much like the simple Diagram \ref{diag:simple_prediction}, the image captioning use case contains only one pair of paths that induce a loss; we want the captions produced by the CNN+LSTM team to be the same as the ground truth. Learning diagrams are capable of handling much more complicated setups where multiple pairs of parallel paths contribute terms to the loss. To demonstrate this capability, we now describe how to implement knowledge distillation as a learning diagram.

Knowledge distillation can refer to many different techniques (\cite{buciluǎ2006model, hinton_distilling_2015}) that share the common goal of taking information from a powerful model or ensemble called the ``teacher" and recovering that information in a simpler, easier to use model called the ``student." A particularly popular approach to distilling knowledge is to train the student on the output of the teacher, whether such output be logits or a softmax with a temperature. Further, the student is trained on labeled data that can be the same as that used in training the teacher, and so receives supervisory signals from two sources; the outputs of the teacher model and labeled data.

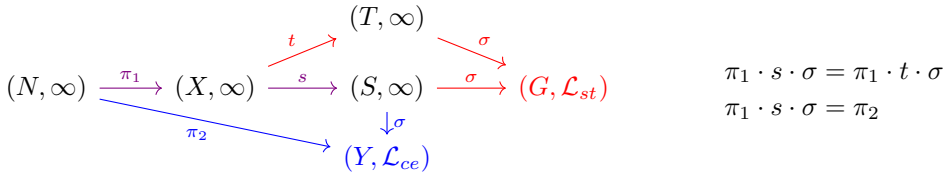
\begin{figure}[htbp]
\begin{minipage}{0.7\textwidth}
    \centering
\begin{equation*}
\begin{tikzcd}[sep=small]
	&&&& {(T, \infty)} \\
	{(N, \infty)} && {(X, \infty)} && {(S, \infty)} && {\color{red} (G, \mathcal{L}_{st})} \\
	&&&& {\color{blue} (Y, \mathcal{L}_{ce})}
	\arrow["\sigma", from=1-5, to=2-7,color=red]
	\arrow["{\pi_1}", from=2-1, to=2-3,color=violet]
	\arrow["{\pi_2}"', from=2-1, to=3-5,color=blue]
	\arrow["t", from=2-3, to=1-5,color=red]
	\arrow["s", from=2-3, to=2-5,color=violet]
	\arrow["\sigma", from=2-5, to=2-7,color=red]
	\arrow["\sigma", from=2-5, to=3-5,color=blue]
\end{tikzcd}%
\end{equation*}%
\end{minipage}
\begin{minipage}{0.29\textwidth}
\begin{align*}
\pi_1\cdot s \cdot \sigma &= \pi_1 \cdot t \cdot\sigma\\
\pi_1\cdot s \cdot {\sigma} &= \pi_2
\end{align*}
\end{minipage}
\caption{A diagrammatic representation of Knowledge Distillation (left), the red and blue parallel paths encode equations that the optimization procedure attempts to solve (right). Components models used in both pairs of parallel paths are shown in violet.}
\label{diag:kd}
\end{figure}
Figure~\ref{diag:kd} recovers a knowledge distillation setup; notice there are two pairs of parallel paths. One terminates at $\color{blue} Y$, the label space, and the other in $\color{red} G$, the soft target space. The learning diagram therefore induces a loss with two terms, one corresponding to the predictions of the student model relative to the ground truth labels, and the other corresponding to the softmax outputs of the student model relative to the soft outputs of the teacher. 
The official PyTorch knowledge distillation introduction ~\cite{chariton_knowledge_nodate} has a more detailed specification of the experiments than the original paper~\cite{hinton_distilling_2015}, so we compare to those benchmarks.
We see in Table~\ref{tab:knowledge_distillation} that the learning diagram produces results comparable to those of the official implementation.

\begin{table}[htbp]
    \centering
    \begin{tabular}{c c c c}
         \toprule
         & Teacher & Student & Distilled Student  \\ [0.5ex] 
         \midrule
         PyTorch Implementation & 74.72 & 70.50 & 70.81 \\ 
         Learning Diagram & 76.24 & 67.67 & 69.21 \\
         \bottomrule
\end{tabular}
    \caption{Performance comparison of official PyTorch knowledge distillation vs DiagrammaticLearning.jl on CIFAR-10.}
    \label{tab:knowledge_distillation}
\end{table}

\subsection{Few Shot Learning}
We now turn to an example that will highlight the utility of building more complicated models in a compositional way. The multi-task and few-shot learning literature contains many examples that would benefit from a compositional perspective, thanks to their pervasive use of shared backbones in conjunction with task specific modules. We select \cite{tian2020rethinking} as our replication target to demonstrate the utility of a compositional perspective, since it exemplifies the idea of training a feature extractor on as much available data as one can, and subsequently training small task specific modules that exploit the learned features. More convoluted training pipelines have had limited success in improving upon this approach. In the case of \cite{tian2020rethinking}, the data used comes from Tiered Imagenet \cite{ren2018meta}, Mini-Imagenet \cite{vinyals2016matching}, FC100 \cite{oreshkin2018tadam}, and CifarFS \cite{bertinetto2018meta}. A ResNet-12 backbone is trained on all data, and subsequently few shot tasks are sampled at random from the respective test sets. Simple affine classifiers are then trained on the feature-label pairs extracted by the pre-trained backbone.

The meta-train and meta-test steps of \cite{tian2020rethinking} have very similar syntax, in that they both extract features and train a classifier to make predictions using those features. The difference is that during meta-training the feature extractor is updated on all tasks. Intuitively, if we can track the task specific interactions of data and models and use them to define the composite interaction seen during meta-training, then we can easily re-use them during meta-testing and avoid implementing a new meta-testing loop from scratch. Learning diagrams provide such functionality by allowing us to define task specific classification setups, and subsequently glue those setups along the shared axis of a common encoder. The exact construction we use for composition, a \emph{colimit}, is depicted and described in Figure \ref{diag:fs}.

\begin{table}[hb]
    \centering
    \begin{tabular}{c c c c c}
         \toprule
         Approach & MI 1-Shot & MI 5-Shot & TI 1-Shot & TI 5-Shot  \\ [0.5ex] 
         \midrule
         \cite{tian2020rethinking} Simple & 62.02 & 79.64 & 69.74 & 84.41  \\ 
         Learning Diagram & 58.13 & 78.82 & 66.48 & 80.15 \\
         \midrule
         Approach & CFS 1-Shot & CFS 5-Shot & FC100 1-Shot & FC100 5-Shot \\ [0.5ex]
         \cite{tian2020rethinking} Simple & 71.5 & 86.0 & 42.6 & 59.1 \\
         Learning Diagram & 70.5 & 85.6 & 43.1 & 59.3 \\
         \bottomrule
\end{tabular}
    \caption{Performance comparison of \cite{tian2020rethinking} vs DiagrammaticLearning.jl on Mini-Imagenet (MI), Tiered-Imagenet (TI), Cifar-FS (CFS), and FC100.}
    \label{tab:fs}
\end{table}

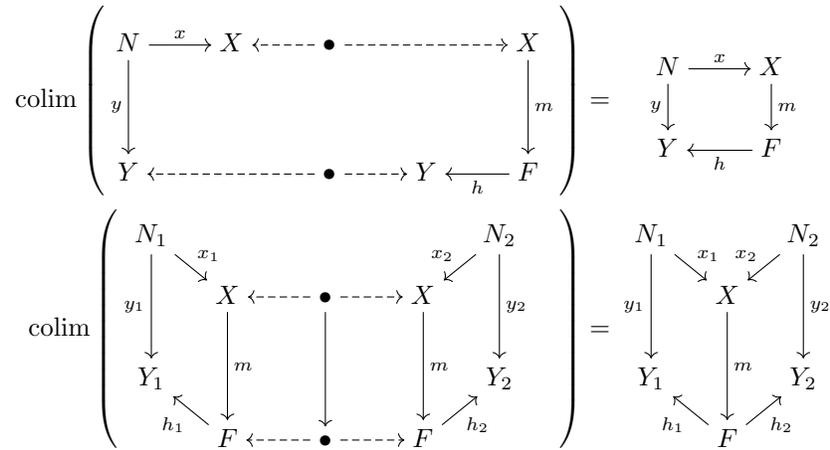
\begin{figure}[ht]
    \centering
\[
\begin{aligned}
\operatorname{colim}\left(
\begin{tikzcd}[sep=normal,cramped]
	N & X & \bullet && X \\
	\\
	Y && \bullet & Y & F
	\arrow["x", from=1-1, to=1-2]
	\arrow["y"', from=1-1, to=3-1]
	\arrow[dashed, from=1-3, to=1-2]
	\arrow[dashed, from=1-3, to=1-5]
	\arrow["m", from=1-5, to=3-5]
	\arrow[dashed, from=3-3, to=3-1]
	\arrow[dashed, from=3-3, to=3-4]
	\arrow["h", from=3-5, to=3-4]
\end{tikzcd}\right) &= %
\quad 
\begin{tikzcd}[cramped,sep=small]
	N && X \\
	\\
	Y && F
	\arrow["x", from=1-1, to=1-3]
	\arrow["y"', from=1-1, to=3-1]
	\arrow["m", from=1-3, to=3-3]
	\arrow["h", from=3-3, to=3-1]
\end{tikzcd}\\
\operatorname{colim}\left(\begin{tikzcd}[sep=small,cramped]
	{N_1} &&&&&& {N_2} \\
	& X && \bullet && X \\
	\\
	{Y_1} &&&&&& {Y_2} \\
	& F && \bullet && F
	\arrow["{x_1}", from=1-1, to=2-2]
	\arrow["{y_1}"', from=1-1, to=4-1]
	\arrow["{x_2}"', from=1-7, to=2-6]
	\arrow["{y_2}", from=1-7, to=4-7]
	\arrow["m", from=2-2, to=5-2]
	\arrow[dashed, from=2-4, to=2-2]
	\arrow[dashed, from=2-4, to=2-6]
	\arrow["", from=2-4, to=5-4]
	\arrow["m", from=2-6, to=5-6]
	\arrow["{h_1}", from=5-2, to=4-1]
	\arrow[dashed, from=5-4, to=5-2]
	\arrow[dashed, from=5-4, to=5-6]
	\arrow["{h_2}"', from=5-6, to=4-7]
\end{tikzcd}\quad\right) &= 
\begin{tikzcd}[sep=small,cramped]
	{N_1} && {N_2} \\
	& X \\
	\\
	{Y_1} && {Y_2} \\
	& F
	\arrow["{x_1}", from=1-1, to=2-2]
	\arrow["{y_1}"', from=1-1, to=4-1]
	\arrow["{x_2}"', from=1-3, to=2-2]
	\arrow["{y_2}", from=1-3, to=4-3]
	\arrow["m", from=2-2, to=5-2]
	\arrow["{h_1}", from=5-2, to=4-1]
	\arrow["{h_2}"', from=5-2, to=4-3]
\end{tikzcd}
\end{aligned}%
\]
\caption{Learning diagrams can be built using the categorical construction of a colimit, which generalizes the concept of a set union or quotienting by an equivalence relation. \textbf{(Top)} A data set and classifier/encoder pair are composed by identifying the common image and label domains $X$ and $Y$ (dashed arrows). The result of the colimit computation is a square whose induced loss trains the classifier and encoder on the labeling task of the associated data set. \textbf{(Bottom)} Two classifier squares are merged along a common feature extraction backbones $m$. The resulting colimit is a learning diagram whose loss trains classifier heads on their respective tasks and the common backbone model on all tasks. For this colimit to be well specified, both the image space $X$ and the feature space $F$ have to be identical. Additional classification heads can be attached by further colimits.}
\label{diag:fs}
\end{figure}

The compositional perspective not only allows us to build large training pipelines quickly, but also to track the presence of notable sub-diagrams via the inclusion morphisms computed as part of the colimit computation. As a result, upon training the foundation model, we can efficiently proceed to the meta-test phase. Using the same ``edata" syntax as in Listing \ref{lst:nic_features}, we first assign a new classifier head and N-way K-shot data set to each of the classifier sub-diagrams, set the encoder to ``eval mode", build the model, and then train on a fewshot data set. Repeating this procedure per randomly sampled few-shot data set gives us the performance noted in Table \ref{tab:fs}.

\section{Implementation}\label{sec:implementation}
Having motivated the functionality of DiagrammaticLearning.jl 
\footnote{This github link has been removed for the double blind review process.} with some common ML tasks, we now turn to describing the implementation that provides such functionality.

\subsection{Specifying the diagram data}

To generate a composite loss, users must specify a diagram. Listing \ref{lst:nic_features} shows an example of the structures required by the library. The required components are
\begin{itemize}
    \item A list of vertices with their labels.
    \item A list of edges keyed by their labels and including their source and target vertices.
    \item Vertex data, with \textbf{isfinite} specifying whether a vertex represents a data source, \textbf{data\_source} representing the source of data to pass onto paths, and \textbf{loss}, which specifies the loss function to be used by parallel paths that converge on this node.
    \item Edge data, with \textbf{model} representing the function attached to an edge and \textbf{$\le$}, representing a specific preorder between edges, a requirement specified further in Section \ref{sec:ct}. Data and fixed parameters can always be treated as models that return a constant value.
\end{itemize}

\subsection{Finding Parallel Paths}
From a learning diagram, we aim to calculate a single loss function. Since the loss function is a sum over all the parallel paths, we first need to compute all pairs of parallel paths from the graph.

To accomplish this, we apply an algebraic perspective and use the fact that powers of the adjacency matrix count paths in a graph. This approach is a generalization of the celebrated Floyd-Warshal algorithm \cite{floyd1962algorithm}. We represent the graph as a $V\times V$ matrix with entries that are sets of edges in the graph. 
This allows us to define a generalized version of matrix multiplication which computes paths between vertices~\cite{fong_seven_2018,pratt_enriched_1989}. 
Multiplication of elements represents concatenation of paths and summation represents union of sets, which leads to an algorithm that computes all the paths in a graph. Powers of this set-of-paths valued matrix then enumerate the paths between each pair of vertices of a fixed length.
Parallel paths can then be collected from any matrix element with more than 1 element.

Many common machine learning loss functions are not symmetric. These loss functions require parallel paths to occur in a specified order. To facilitate this, we assume that all pairs of single edges are ordered (i.e. $e_1 \leq e_2$ and $e_2 \leq e_1$ for edges $e_1, e_2$), unless only a single pair is specified in the graph. The process that builds the matrix containing parallel paths additionally builds up a preorder between paths, allowing a filter to include only parallel paths that conform to the preorder.

\subsection{Integration with Existing Libraries}
Due to the generic nature of our representation, we can view a learning diagram as a sort of intermediate language, independent of a particular neural network framework. To this end, we can compile either PyTorch or Flux models from a given diagram. This flexibility allows for the creation of diagrams for new workflows and experiments, as well as the retrofitting of diagrams to existing codebases. This paper focuses on PyTorch models due to its ubiquity in machine learning research and practice.

The correctness of the compilation process is explored rigorously in the next section, but for a more familiar explanation, we refer to PyTorch components. Vertices in a graph in our framework contain both data sources (such as a PyTorch dataloader) as well as a loss function. A path in our graph can quite easily be converted into a PyTorch Sequential model. Given parallel paths, a module can be created which samples $x$ from the data source present at the domain of the paths, passes $x$ through both sequential models to obtain outputs $y, \hat{y}$, and outputs $l(y, \hat{y})$, the result of applying the loss function $l$ found in the codomain of the paths. Summation of these modules represents composite loss functions. This process thus encapsulates data sampling, network computation, and loss computation into a single PyTorch module. 

The requirements for compatibility with our framework are similar to those for compatibility with pure PyTorch. For data, we require iterable components, whether it be DataLoaders or just lists. For computations, we require functions, either in the form of torch modules, or pure Python functions. Thus, popular frameworks within the PyTorch landscape, such as models from HuggingFace, are compatible with our system.

\section{Theoretical Grounding}\label{sec:ct}
Having discussed some applications and our implementation, we now turn to the underlying category theoretic formalisms and definitions. It is the content of this section that ensures learning diagrams have rigorous semantics, and therefore that manipulations of diagrams and maps between them can be interpreted mathematically. The ultimate goal of the section is a machinery that ``mathematically compiles'' a learning diagram into a composite loss function that drives participating models to form approximately commuting diagrams. Throughout this section we will use some category theory that we may not define due to space constraints. Everything left undefined should be easy to find in introductory references~\cite{riehl2017category,fong_seven_2018}. Our story proceeds by defining what a learning diagram is, explaining how parallel paths can be extracted from it, and finally how these parallel paths combine to form a composite loss function. 

\subsection{Learning Graphs and Learning Diagrams}
    A \emph{learning graph} $(G, \leq)$ is a directed multigraph $G = (V, E, \mathrm{src}, \mathrm{tgt})$ together with a preordering $\leq$ of its edges.

We will motivate the preorder in due time. On its own, a learning graph doesn't associate a loss function to each node of the graph. Rather, we consider it a syntactic presentation of a training regime. As we saw in the image captioning example, this distinction can be useful when varying the exact data or models being used. In the end, however, what we want is a full learning diagram. The choice of the word diagram is no accident; it has a proper category theoretic definition. 

\begin{definition}
    (1.6.4 of \cite{riehl2017category}) A diagram in a category $\mathsf{C}$ is a functor $D \maps \mathsf{J} \to \mathsf{C}$ whose domain, the indexing category, is a small category.
\end{definition}

To define a diagram, therefore, we must first specify a domain category $\mathsf{J}$ and a codomain category $\mathsf{C}$. Our domain category comes from our learning graph and identifies the syntactic structure of our training regime: how many spaces of data or representations we are considering, how many models, etc. The codomain category $\mathsf{C}$ defines what kind of thing each node and edge in our learning graph is, i.e. the semantics of the training setup, and the functor $D$ is the mapping from syntactic structure to semantic content that assigns particular spaces and models to edges. Constructing a category to serve as our domain is fairly straightforward; we can take a learning graph and turn it into a category by taking vertices of the graph to be objects of a category and \emph{paths} in a graph to be morphisms. Such a construction is called the \emph{free category on a graph}. Defining the semantics conferring codomain requires more slightly more work. Given that we want to associate a potentially asymmetric loss function to each node in a learning graph, we should consider a category $\mathsf{C}$ whose objects are some generalization of metric spaces. We choose the category $Law$.

\begin{definition}
    Denote by $\mathsf{Law}$ the category that has objects Lawvere Metric Spaces \cite{lawvere1973metric} and morphisms Lipschitz continuous functions.
\end{definition}

\subsection{Finding Paths and Building Losses}
\label{findingpaths}

Let $\mathsf{Par}$ be the free category on the following graph.
\begin{equation}
\begin{tikzcd}
	\bullet & \bullet
	\arrow["{+}", curve={height=-12pt}, from=1-1, to=1-2]
	\arrow["{-}"', curve={height=12pt}, from=1-1, to=1-2]
\end{tikzcd}\label{eqn:Par}
\end{equation}
We can use the category $\mathsf{Par}$ (\ref{eqn:Par}) to formalize the construction of losses from parallel paths. If $(G, \leq)$ is a learning graph, and if $\mathsf{Free}(G)$ is the free category generated by $G$, then a functor $P: \mathsf{Par} \to \mathsf{Free}(G)$ chooses a pair of paths in $G$ with the same initial and terminal vertex. If $D: \mathsf{Free}(G) \to \mathsf{Law}$ is a learning diagram on $G$, then the composite functor $D \circ P: \mathsf{Par} \to \mathsf{Law}$ chooses a pair of Lipschitz continuous functions $f: X \to Y$ and $g: X \to Y$ with the same domain and codomain. We will assign a loss to these functions as follows. If $X$ is a finite set, then we define $\ell(f, g) \in [0, +\infty]$ to be the sum
$$
    \ell(f, g) := \sum_{x \in X} d_Y (f(x), g(x)).
$$
If $X$ is an infinite set, then we take the supremum over all finite sums. This quantity measures the distance between outputs of $f$ and $g$ summed over all inputs. It also has the following contractive property: if $f: X \to Y$, $g: X \to Y$, $f': X' \to Y'$, and $g': X' \to Y'$ are Lipschitz continuous functions, and if $p: X \to X'$ and $q: Y \to Y'$ are surjective 1-Lipschitz functions such that
$$
f'(p(x)) = q(f(x)) \text{ and } g'(p(x))) = q(g(x)))
$$
for all $x \in X$, then $\ell(f, g) \geq \ell(f', g')$. Formally, loss is an enriched functor $\ell: \mathsf{C} \to \mathsf{Cost}$, where $\mathsf{C}$ is the subcategory of the functor category $\mathsf{Law}^\mathsf{Par}$ whose fibers are surjective 1-Lipschitz functions, and where $\mathsf{Cost}$ is the partially ordered set $[0, \infty]$.
\begin{theorem}
    The mapping $P \mapsto \ell(f, g)$ defines a functor $\ell: \mathsf{C} \to \mathsf{Cost}$.
\end{theorem}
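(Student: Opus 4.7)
The plan is to unpack what functoriality means for $\ell\colon \mathsf{C}\to\mathsf{Cost}$ and then verify it using the three ingredients already hinted at in the text: the 1-Lipschitz property of the $Y$-component, surjectivity of the $X$-component, and the naturality equations. An object of $\mathsf{C}$ is a parallel pair of Lipschitz maps $(f,g)\colon X\rightrightarrows Y$ (a functor $\mathsf{Par}\to\mathsf{Law}$), and a morphism $(f,g)\to(f',g')$ is a natural transformation in $\mathsf{Law}^{\mathsf{Par}}$ whose components are surjective 1-Lipschitz, i.e., a pair $(p,q)$ with $p\colon X\twoheadrightarrow X'$ and $q\colon Y\twoheadrightarrow Y'$ satisfying $f'\circ p = q\circ f$ and $g'\circ p = q\circ g$. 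Since $\mathsf{Cost}=([0,\infty],\geq)$ is a thin category, preservation of identities and composition is automatic, so functoriality of $\ell$ amounts to exactly two checks: (i) $\ell(f,g)\in[0,\infty]$, and (ii) monotonicity, $\ell(f,g)\geq \ell(f',g')$ whenever a morphism $(p,q)$ exists.

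Well-definedness (i) is immediate from the definition $\ell(f,g)=\sum_{x\in X}d_Y(f(x),g(x))$, read as a supremum over finite subsums when $X$ is infinite; distances are non-negative, so the value lies in $[0,\infty]$.

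For monotonicity (ii), I would treat the finite case first. Using naturality to replace $q(f(x))$ by $f'(p(x))$ and $q(g(x))$ by $g'(p(x))$, and then using that $q$ is 1-Lipschitz, we get the pointwise bound $d_Y(f(x),g(x))\geq d_{Y'}(f'(p(x)),g'(p(x)))$ for every $x\in X$. Summing and reindexing by fibers of $p$ yields
\[
\ell(f,g)\;\geq\;\sum_{x\in X}d_{Y'}\bigl(f'(p(x)),g'(p(x))\bigr)\;=\;\sum_{x'\in X'}|p^{-1}(x')|\cdot d_{Y'}(f'(x'),g'(x')).
\]
Surjectivity of $p$ gives $|p^{-1}(x')|\geq 1$ for all $x'\in X'$, so the right-hand side dominates $\ell(f',g')$, and we are done. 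For general (possibly infinite) $X$, the same argument is carried out on finite subsets: for each finite $F'\subseteq X'$, surjectivity lets us choose a section and lift $F'$ to a finite $F\subseteq X$ with $p|_F\colon F\to F'$ a bijection; applying the finite-case inequality to $F$ produces $\sum_{x'\in F'}d_{Y'}(f'(x'),g'(x'))\leq \ell(f,g)$, and taking the supremum over $F'$ recovers $\ell(f',g')\leq \ell(f,g)$.

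The main obstacle is the bookkeeping around surjectivity and possible non-injectivity of $p$. A naive attempt to pass from $X$ to $X'$ by change of variables would undercount the distance terms whenever some fiber $p^{-1}(x')$ is empty or overcount when it has multiplicity; the fiber-decomposition argument above sidesteps both issues cleanly, and the lift-by-section trick makes the infinite case parallel to the finite one. Everything else — identities, composites, and the target being a poset — is formal.
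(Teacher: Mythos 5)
Your proof is correct and follows essentially the same route as the paper's: both reduce functoriality to the contractive inequality $\ell(f,g)\geq\ell(f',g')$ (composition and identities being automatic since $\mathsf{Cost}$ is a poset), establish it on finite subsets of $X'$ using surjectivity of $p$, the naturality equations, and the 1-Lipschitz property of $q$, and then take a supremum. The only difference is cosmetic: you lift a finite $F'\subseteq X'$ along a chosen section of $p$, whereas the paper bounds the sum over $F'$ by the sum over the full preimage $p^{-1}F'$; your variant is, if anything, slightly tidier when that preimage is infinite.
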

\begin{proof}
    Let $f$, $g$, $f'$, $g'$, $p$ and $q$ be the functions described above. Functoriality is the contractive property, $\ell(f, g) \geq \ell (f', g')$. To see this, let $S \subseteq X'$ be a finite set. Then,
    $$
        \begin{aligned}
        \sum_{x' \in S} d_{Y'} (f'(x'), g'(y')) &\leq \sum_{x \in p^{-1}S} d_{Y'} (f'(p(x)), g'(p(x))) \\
                     &=    \sum_{x \in p^{-1}S} d_{Y'} (q(f(x)), q(g(x))) \\
                     &\leq \sum_{x \in p^{-1}S} d_Y (f(x), g(x)) \\        
        \end{aligned}
    $$
    where the first line follows from the surjectivity of $p$ and the third from the contractivity of $q$. The desired inequality follows from the arbitrary selection of $S$.
\end{proof}

\subsection{Limiting the Search}

Diagrams in $\mathsf{Law}$ with a shape given by a learning graph very nearly capture the data we require to build a composite loss, but for two problems. For one, if we are allowing asymmetric loss functions, then we must provide the creator of a learning diagram with a way to say how arguments to a loss should be ordered. Additionally, some vertices in the learning graph will be assigned finite collections for indexing data sets and other vertices will be assigned vector spaces for representing the spaces containing the data values. The loss functions should only use parallel paths that start at indexing, because these lead to finite sums in the loss functions.  

\paragraph{Ordering paths}
The preorder on edges in a learning graph naturally leads to a preorder structure on the free category associated with a graph. This produces a
\emph{locally posetal 2-category}, which is a category with a preorder on every hom-set, such that morphism composition is monotonic.

The upshot is that our syntactic 2-category now tracks user specified ordering of models, i.e. which model's predictions should be the first argument to a loss and which should be the second. 

\paragraph{Indexing and Nonindexing Vertices} The other problem is that we only want to build composite losses for parallel paths that start at specific nodes. The machinery of categories provides a succicint way to specify the relevant constraints. Let $\mathsf{Ind}$ be the free 2-category generated by the learning graph
\begin{equation}
\begin{aligned}
\begin{tikzcd}
    i & n
	\arrow["", from=1-1, to=1-2]
\end{tikzcd}
\end{aligned}\label{eqn:Ind}
\end{equation}.
By setting $- \leq +$, we may upgrade the  category $\mathsf{Par}$ defined in section \ref{findingpaths} to a 2-category. These categories are related by a 2-functor $\alpha: \mathsf{Par} \to \mathsf{Ind}$.
\begin{equation}
\begin{tikzcd}
	\bullet & \bullet \\
	i & n
	\arrow["{+}", curve={height=-6pt}, from=1-1, to=1-2]
	\arrow["{-}"', curve={height=6pt}, from=1-1, to=1-2]
	\arrow[dashed, from=1-1, to=2-1]
	\arrow[dashed, from=1-2, to=2-2]
	\arrow["", from=2-1, to=2-2]
\end{tikzcd}\label{eqn:alpha}
\end{equation}
We use the 2-category $\mathsf{Ind}$ to assign labels to each vertex of a learning graph. Specifically, for all learning graphs $G$, a functor $\beta: \mathsf{Free}(G) \to \mathsf{Ind}$ is a labelling of the vertices of $G$ as either \emph{indexing} or \emph{non-indexing}, and the structure of $\mathsf{Ind}$ ensures that indexed vertices precede non-indexed ones. A commutative diagram of the form (\ref{eqn:triangle}) 
chooses a pair of parallel paths that begin at indexed nodes.
\begin{equation}
\begin{tikzcd}
	{\mathsf{Par}} & {\mathsf{Free}(G)} \\
	& {\mathsf{Ind}}
	\arrow["P", from=1-1, to=1-2]
	\arrow["\alpha"', from=1-1, to=2-2]
	\arrow["\beta", from=1-2, to=2-2]
\end{tikzcd}
\label{eqn:triangle}
\end{equation}
Given a learning diagram $D: \mathsf{Free}(G) \to \mathsf{Law}$, the composite functor $D \circ P: \mathsf{Par} \to \mathsf{Law}$ chooses a pair $f: X \to Y$ and $g: X \to Y$ of Lipschitz continuous functions, and we can compute the loss of these functions as in section \ref{findingpaths}. (We define the \emph{composite loss} of a triple $(G, D, \beta)$ to be the sum $\ell^*(G, D, \beta) \in [0, +\infty]$ given by
$$
    \ell^*(G, D, \beta) = \sum_p \ell(f, g)
$$)
where $f$ and $g$ are the functions corresponding to the functor $D \circ P$, and where the sum is taken over all choices of $P$ making the the diagram in \ref{eqn:triangle} commute. This construction allows us to build losses automatically from graphical specifications of machine learning architectures. The contractive nature of loss functions ensures that composite losses are well behaved with respected to taking subsets of each dataset in the diagram.

\section{Related Work}
We find ourselves aligned in vision with the field of AutoML in that we seek to make the life of the machine learning practitioner easier. More specifically, we fit with work that focuses on structuring machine learning pipelines; a pipeline focused AutoML survey can be found in \cite{zoller2021benchmark}. The formulation of \cite{zoller2021benchmark} describes pipelines as Directed Acyclic Graphs (DAGs). We differ from this formulation and other DAG-based formulations in many important ways. For one, we have a mathematically grounded definition of what data is captured by nodes and paths in our learning diagrams, whereas prior art appeals to imprecise terms such as ``primitives" \cite{lippmann2016overview}, ``algorithms", ``operators", ``transformations" and so on. There are some works (e.g. \cite{drori2019automatic}) that place more structure on pipelines, but in this framework the grammar is still defined on an unstructured set of ad-hoc operations. Further, works \cite{kalyuzhnaya2020automatic, nikitin2022automated, polonskaia2021multi} that define somewhat flexible pipelines usually limit themselves to DAGs that have a single source and sink representing the data and label space. As best we can tell, there appear to be at most a few exceptions to this rule, e.g.  \cite{klein2017robo} which is limited to Bayesian settings with a common data domain shared among tasks. Thanks to the rigor of our model we can accommodate arbitrarily many tasks interacting in arbitrary ways, so long as the nodes are Lawvere metric spaces and the paths between nodes are continuous.

Learning diagrams also provide a graphical programming language for constructing machine learning models, and in that sense are related to works that study foundations of languages for machine learning models~\cite{elliott2018simple}, though such techniques tend to focus on probabilistic programming~\cite{gordon2014probabilistic} or the semantics of gradient descent and less on the semantics of the objective to be optimized. 

Category theory has already been used to examine machine learning models compositionally, for example in~\cite{fong2019backprop, cruttwell2022categorical}. Our approach is somewhat different from most such works in that we focus on the semantics of the loss or objective function as opposed to the semantics of model updates. That said, we believe there are connections to explore, particularly with~\cite{hanks2024generalized} as it also considers how to specify compositionally a loss for machine learning models. More generally, our work belongs to an emerging tradition of using category theory to provide a diagrammatic yet rigorous approach to describing models in a domain of interest Decapodes.jl~\cite{morris2024jocs,patterson_diagrammatic_2023,libkind2022algebraic} and AlgebraicPetri.jl~\cite{libkind2022algebraic}.

\section{Future Work and Conclusion}
We introduce DiagrammaticLearning.jl, a library built on the notion of learning graphs and learning diagrams allowing machine learning practitioners to specify complex interactions between multiple models in a way that is inherently compositional and structured. The utility of this approach is established by capturing some popular training setups as learning diagrams and demonstrating convenient functionality that came as a result. We further outline the implementation and theoretical backing that make learning diagrams possible, demonstrating a rich and rigorous semantics that leads naturally to principled operations for manipulating structured collections of models. 

This paper has only scratched the surface of potential capabilities. Future work should investigate how model structure can be exploited to improve distributed training and manage high performance systems. Additionally, our structured representation seems like a candidate for the foundation of a more complete machine learning model management system. We hope for example to elevate software centric definitions of reproducibility, such as the one given in \cite{he2021automl}, to a mathematical plane that is ``implementation free" in the sense that any implementation faithfully reproducing the math will yield equivalent results. Homomorphisms between diagrams could capture relationships between development iterations and facilitate operations like searching for models with particular properties or component sub-diagrams.  On the more theoretical side, our formalism is currently limited to building losses for finite data sets, but extensions to methods where continuous distributions can enter the diagrams are possible. Additionally, our implementation of training networks of models uses mini-batches, but the theory does not address such aspects of practical training algorithms.

Development and adoption of these tools would reduce the labor necessary to produce effective ML pipelines, accelerate iteration of ideas within and between research groups, reduce errors introduced in software implementations, and provide quicker turnarounds while evaluating novel ideas. We hope the learning diagrams paradigm will accelerate the overall process of machine learning research.

\bibliography{references}
\bibliographystyle{plainnat}

\end{document}